        \newtheorem{thm}{Theorem}[section]
        \newtheorem{lem}[thm]{Lemma}
        \newcommand{\eq}[1][r]
           {\ar@<-3pt>@{-}[#1]
            \ar@<-1pt>@{}[#1]|<{}="gauche"
            \ar@<+0pt>@{}[#1]|-{}="milieu"
            \ar@<+1pt>@{}[#1]|>{}="droite"
            \ar@/^2pt/@{-}"gauche";"milieu"
            \ar@/_2pt/@{-}"milieu";"droite"}
        \newcommand{\bigon}[4][r]{% %%%%% bigons
            \ar@/^1pc/[#1]^{#2}_*=<0.3pt>{}="HAUT"
            \ar@/_1pc/[#1]_{#3}^*=<0.3pt>{}="BAS"
            \ar@{=>} "HAUT";"BAS" ^{#4}
          }
        \newcommand{\bigons}[6][r]{  %%%%% Vertical composition of bigons
            \ar@/^2pc/[#1]^{#2}_*=<0.3pt>{}="HAUT"
            \ar@{}    [#1]     ^*=<0.3pt>{}="MILIEUHAUT"
                               _*=<0.3pt>{}="MILIEUBAS"
            \ar[#1]_(0.3){#3}                  
            \ar@/_2pc/[#1]_{#4}^*=<0.3pt>{}="BAS"
            \ar@{=>} "HAUT";"MILIEUHAUT" ^{#5}
            \ar@{=>} "MILIEUBAS";"BAS" ^{#6}
          }
        \theoremstyle{definition}
        \newtheorem{df}[thm]{Definition}
        \theoremstyle{remark}
        \newtheorem{rmk}[thm]{Remark}
\title[Universal Approximation]{Universal Approximation of Operators with Transformers and Neural Integral Operators}
\author{Emanuele Zappala$^*$}\thanks{$^*$Corresponding author}
\address{Department of Mathematics and Statistics, Idaho State University
	Physical Science Complex|  921 S. 8th Ave., Stop 8085 | Pocatello, ID 83209, USA} 
\email{emanuelezappala@isu.edu}
\author{Maryam Bagherian}
\address{Department of Mathematics and Statistics, Idaho State University
	Physical Science Complex|  921 S. 8th Ave., Stop 8085 | Pocatello, ID 83209, USA} 
\email{maryambagherian@isu.edu}
\begin{document}

\begin{abstract}
    We study the universal approximation properties of transformers and neural integral operators for operators in Banach spaces.
    In particular, we show that the transformer architecture is a universal approximator of integral operators between H\"older spaces. Moreover, we show that a generalized version of neural integral operators, based on the Gavurin integral, are universal approximators of arbitrary operators between Banach spaces. Lastly, we show that a modified version of transformer, which uses Leray-Schauder mappings, is a universal approximator of operators between arbitrary Banach spaces. 
\end{abstract}

\maketitle

\section{Introduction}

Operator learning constitutes a branch of deep learning dedicated to approximating nonlinear operators defined between Banach spaces. The appeal of this field stems from its capacity to model complex phenomena, such as dynamical systems, for which underlying governing equations remain elusive. Operator learning can be thought to have originated with the theoretical groundwork established in \cite{chen1995universal} and subsequently implemented in \cite{lu2021learning}.

Operator learning has been successfully applied to several fields of science, including learning neural integro-differential equations \cite{NIDE}, integral operators \cites{ANIE_NAT,ANIE}, quantum state tomography \cite{torlai2018neural}, quantum circuit learning \cite{mitarai2018quantum}, control theory \cite{Hwang_Lee_Shin_Hwang_2022}, partial differential equations \cite{PINN_NOperator}, and holographic quantum chromodynamics \cite{hashimoto2021neural} to mention but a few. 

Transformers, \cite{vaswani2017attention}, have emerged as a powerful tool in natural language processing, and they have more recently been employed for addressing problems in operator learning, including dynamical systems. Previous research has explored their use in integral equations \cite{ANIE_NAT,ANIE}, inverse problems \cite{guo2023transformer}, fluid flows \cite{geneva2022transformers}, PDEs \cite{cao2021choose} among others.

Recent theoretical work highlights the strong approximation capabilities of transformer architectures across a range of settings. In \cite{fang2022attention}, the authors showed that a variation of transformers attention mechanism, which includes free parameters, allows recovering of polynomials with zero error over a compact subset of a Euclidean space. This procedure has the drawback of potential exponential scaling of free parameters. It was shown in \cite{yun2019transformers}, that transformers can approximate sequence-to-sequence functions with respect to a suitable ``average'' error. The case of approximating sequence-to-sequence functions with some smoothness, and with infinite dimensional inputs was studied in \cite{takakura2023approximation}, where learnable positional encoding  was employed. More recently, in \cite{jiao2025approximation}, these works have been further extended to show that sequence-to-sequence functions with H\"older and Sobolev regularity can be approximated in the $L^p$ norm.

Recent studies have explored the application of transformers to operator modeling. For instance, \cite{shih2025transformers} demonstrated the efficacy of attention mechanisms in learning operators with low regularity, specializing in Izhikevich and tempered fractional LIF neuron models. This work builds upon previous research \cite{lanthaler2022error,deng2022approximation} that established near-optimal error bounds for learning nonlinear operators between infinite-dimensional Banach spaces, notably DeepONets \cite{lu2019deeponet}. In addition, the work of \cite{shih2025transformers} shows that on Banach spaces with the approximation property, transformers with positional encoding are universal approximators, with the addition of an encoder and a decoder, by applying the results of \cite{cordonnier2019relationship} to uniformly approximate continuous functions via CNNs. 

The case of transformers without positional encoders, and without additional encoders and decoders, have not currently been extensively explored in the case of operator learning problems. In fact, the results of \cite{yun2019transformers,takakura2023approximation,jiao2025approximation} when applied directly to the proof of \cite{shih2025transformers} in this setting, give error bounds that are either not uniform, or are restricted to some classes of functions that are not directly related to the construction in \cite{shih2025transformers}.  

The main result of the present paper is to establish the transformer architecture without positional encoding as a universal approximator for integral operators between Hölder spaces.  Our methods can also be applied for any equivariant universal approximator of sequence-to-sequence mappings, of which the transformer is an example. Along the way, we show that adding Leray-Schauder mappings to transformers, and relaxing the assumption on equivariance, one obtains a universal approximator of operators between arbitrary Banach spaces. Furthermore, we show that a generalized form of neural integral operators can approximate arbitrary operators between Banach spaces.

The paper is organized as follows. Section~\ref{sec:pre} outlines the necessary preliminaries. In Section~\ref{sec:transformer}, we prove that transformers are universal approximators for Hölder spaces. Section~\ref{sec:NIE} investigates the universal approximation capabilities of neural integral operators for operators in Banach spaces. We conclude with Section~\ref{sec:fp}, exploring further perspectives and open questions.

\section{Preliminaries}\label{sec:pre}
\subsection{Notation}
Throughout the article, we assume a discretization of the interval $[a, b]$ as $a = t_0 < t_1 < \cdots < t_{n-1} < t_n = b$. Unless otherwise explicitly stated, over any subinterval, i.e. $[a, t_1], \cdots, [t_{n-1},b]$, the point $x_i\in [t_i, t_{i+1}]$.

Let $X= C^{k,\alpha}([a,b])$ denote the H\"older space of $k$-differentiable functions with $k^{\rm th}$ derivative which are H\"older continuous. We let $T: X\longrightarrow X$ be an integral operator. Consider the values $x_i$, and $z_k = y(t_k)$ for $i, k=1,\ldots , n$, where $x_i$ and $t_k$ vary among the points of the discretization determined by the integration approximation.

Let the kernel of $T$ be the function $G: \mathbb R\times [a,b]\times [a,b] \longrightarrow \mathbb R$ which is continuous with respect to all variables, and is continuously differentiable with respect to the first and third variables. Here, $G$ takes as input a function $u$, and two ``time'' variables from the interval $[a,b]$. Then, $T$ is defined according to the assignment
\begin{eqnarray*}
    T(u)(x) = \int_a^bG(u(t),x,t)dt.
\end{eqnarray*}
In other words, $T$ is a Urysohn integral operator. Below, we will consider some additional conditions on the regularity of $G$, for our results to hold.  

For a function $f: \mathbb R^n \longrightarrow \mathbb R^m$, we indicate by $f^i(x_1, \ldots, x_n)$ the $i^{\rm th}$ entry of the vector $f(x_1, \ldots, x_n)$. 

Following the previous conventions, for each $i=1,\ldots , n$ in the rest of the paper we indicate 
\begin{eqnarray*}
    T^i_u(x_1, \ldots, x_n) := \sum_{k=1}^n G(z_k,x_i,t_k)w_k,
\end{eqnarray*}
where the $w_k$ are weights of a quadrature, as will be defined below, and $z_k$ are the values of the input function $u$ on the discretization points. Observe that fixed $u$, each $z_k$ is determined by $(x_1,\ldots, x_n)$, which justifies the notation.  
Given a function $F: \mathbb R^n \longrightarrow \mathbb R^n$, we introduce the following norm
 \begin{eqnarray}\label{def:comp_pnorm}
       \|F\|_p = \left (\sum_{i=1}^n \int_{[a,t_1]\times \cdots \times [t_{n-1},b]}\, |F^i(x_1,\ldots,x_n)|^p\, d\mu\right )^\frac{1}{p}, 
    \end{eqnarray}   
 which corresponds to the Euclidean norm of the component-wise $p$-norm, and can be applied to $T_u^i$ as a particular case. This $p$-norm will be the objective of the approximation results over H\"older spaces for transformers.

\subsection{Leray-Schauder mappings}
Leray-Schauder mappings \cite{precup2002theorems,krasnoselskii} are a fundamental tool in nonlinear functional analysis. They are used to establish the existence of solutions to nonlinear problems by extending the concept of degree from finite-dimensional spaces to infinite-dimensional Banach spaces. Roughly speaking, given a compact $K$ and a finite dimensional subspace of a Banach space, we want to find a continuous mapping onto the finite subspace satisfying the property that the norm of elements in the compact and the image of the mapping is smaller than a fixed $\epsilon >0$.
Following \cite{krasnoselskii}, we define Leray-Schauder mappings on a Banach space $X$ as follows. Let $K$ be a compact, $E_n$ an $n$-dimensional space spanned by the $\epsilon$-net $x_1, \ldots, x_n$ of $K$. We define $P_n : X \longrightarrow E_n$ by the assignment
\begin{eqnarray*}
    P_n x = \frac{\sum_{i=1}^{n} \mu_i(x) x_i}{\sum_{i=1}^n\mu_i(x)},
\end{eqnarray*}
where 
\begin{eqnarray}
\mu_i(x) = \begin{cases}
\epsilon - \|x-x_i\|, \hspace{1cm} \|x-x_i\| \leq \epsilon\\
0, \hspace{2.855cm} \|x-x_i\| > \epsilon
\end{cases}
\end{eqnarray}
for all $i=1, \ldots , n$. One sees directly that this mapping satisfies the property that 
\begin{eqnarray*}
    \|x - P(x)\| < \epsilon,
\end{eqnarray*}
for all $x\in K$. 

\subsection{Gavurin neural integral operator $\mathfrak T(x)$}

We hereby recall the definition of Gavurin integral \cite{Gavurin}, and use this to define a class of neural integral operators of relevance for the rest of the paper. 

We follow \cites{kantorovich2016functional,Gavurin}, and consider a mapping $R: [x_0, \bar x]\subset X \to  B(X,Y)$, where $B(X,Y)$ denotes the space of bounded linear operators from $X$ to $Y$, two Banach spaces. Then, the integral of $R$ is defined in a procedure similar to the classical Riemann integral, as
    \begin{eqnarray}\label{eqn:def_int}
        \int_{x_0}^{x_0+\Delta x} R(x) dx = \int_0^1 R(x_0+t\Delta x) \Delta x \, dt = \lim_{n\to \infty} \sum\limits_{k=0}^{n-1} R(x_0+\tau_k\Delta x)\Delta x\,  (t_{k+1}-t_k),
    \end{eqnarray} 
when the limit exists and it is independent on the choices of $\tau_k$. We indicate the action of the integrap operator on an element $z$ of $X$ as $\int_{x_0}^{x_0+\Delta x} R(x)(z) dx$

We define a {\it Gavurin integral operator} to be an integral operator $T: X \longrightarrow Y$ which is the point-wise convex sum of integral operators as in Equation~\eqref{eqn:def_int}
\begin{eqnarray}\label{eqn:Gavurin_operator}
    T(z) = \sum_i \eta_i(z)[F_i(z) + \int_{x_0}^{x_0+\Delta x} R_i(x)(z) dx],
\end{eqnarray}
where for every $z$, $\sum_i \eta_i(z) = 1$, i.e. the coefficients are a partition of unity, and $F_i$ are arbitrary linear operators. 

We define a {\it Gavurin neural integral operator} to be a Gavurin integral operator where $F_i$ and $R_i$ consist of a Leray-Schauder mapping composed with a neural network. 

\subsection{Models with Permutation-equivariance}
Equivariance can be formalized using the concept of a $G$-set for a group $G$, consisting of a set $S$ and where $G$ acts on $S$ from the right. Given two $G$-sets, $X$ and $Y$, for the same group $G$, then a function $f : X \to Y$ is said to be \textit{equivariant} if \(
f(x \cdot g) = f(x) \cdot g,\)
for all $g \in G$ and all $x \in X$.

A \textit{Transformer} is a sequence-to-sequence model composed of an encoder and a decoder, both built from stacks of self-attention and feedforward layers. A Transformer defines a function
\(
\mathrm{T}: \mathcal{X}^n \rightarrow \mathcal{Y}^m, 
\)
where \( \mathcal{X} \) and \( \mathcal{Y} \) are the input and output token spaces, respectively, and \( n \), \( m \) are the input and output sequence lengths. Given an input sequence of tokens \( x = (x_1, \dots, x_n) \), each token is mapped to a vector via an embedding function
\(
\mathbf{e}_i = \mathrm{Embed}(x_i) \in \mathbb{R}^d. 
\)
A positional encoding \( \mathbf{p}_i \in \mathbb{R}^d \) is added to retain order information
\(
\mathbf{z}_i^{(0)} = \mathbf{e}_i + \mathbf{p}_i.
\)
At each layer \( \ell \), the input \( \mathbf{Z}^{(\ell-1)} = (\mathbf{z}_1^{(\ell-1)}, \dots, \mathbf{z}_n^{(\ell-1)}) \) is transformed via multi-head self-attention:
\[
\mathrm{Attention}(Q, K, V) = \mathrm{softmax}\left( \frac{QK^\top}{\sqrt{d_k}} \right)V
\]
\[
\mathbf{z}_i^{(\ell,\text{attn})} = \mathrm{MultiHead}(\mathbf{z}_i^{(\ell-1)}) = \mathrm{Concat}(h_1, \dots, h_H)W^O
\]
where each head \( h_j \) is
\[
h_j = \mathrm{Attention}(Q_j, K_j, V_j), \quad Q_j = \mathbf{Z}^{(\ell-1)} W_j^Q, \; K_j = \mathbf{Z}^{(\ell-1)} W_j^K, \; V_j = \mathbf{Z}^{(\ell-1)} W_j^V.
\]
Each position \( i \) is updated via a position-wise feedforward network:
\[
\mathbf{z}_i^{(\ell,\text{ff})} = \mathrm{FFN}(\mathbf{z}_i^{(\ell,\text{attn})}) = \sigma(\mathbf{z}_i W_1 + b_1) W_2 + b_2.
\]
The full update rule at each layer \( \ell \) is
\begin{align*}
\mathbf{z}_i^{(\ell,\|\cdot\|_1)} &= \mathrm{LayerNorm}(\mathbf{z}_i^{(\ell-1)} + \mathbf{z}_i^{(\ell,\text{attn})}),\\
\mathbf{z}_i^{(\ell)} &= \mathrm{LayerNorm}(\mathbf{z}_i^{(\ell,\text{norm1})} + \mathbf{z}_i^{(\ell,\text{ff})}).
\end{align*}
In the context of Transformers, particularly in machine learning models, \textit{equivariance} refers to equivariance with respect to the natural symmetric group action on sequences of inputs-outputs. More specifically, for the function \( f: X \to Y \) (e.g., a neural network layer), and the group \( G \) acting on both \( X \) and \( Y \), the function \( f \) is said to be equivariant with respect to \( G \) if
\[
f(x \cdot g) = f(x) \cdot g, \quad \forall g \in G, \; x \in X,
\]
where \( x \cdot g \) and \( f(x) \cdot g \) denote the actions of \( g \) on \( X \) and \( Y \), respectively.

Deep Sets~\cite{zaheer2017deep} are explicitly designed to be permutation invariant or equivariant by applying a shared function to each element followed by a symmetric aggregation operation (e.g., summation). Neural Functional models~\cite{zhou2024universal} generalize this idea by treating input data as functions over domains and ensure equivariance under transformations such as permutations by construction. Transformers~\cite{vaswani2017attention}, before the addition of positional encodings, are naturally permutation equivariant due to the symmetric structure of the self-attention mechanism; i.e., each token attends to all others through shared attention weights, making the output invariant to the order of inputs under shared weights. However, once positional information is introduced, this permutation symmetry is typically broken.

The approach pursued in this article can be applied to any architecture which is equivariant and a universal approximator of sequnce-to-sequence mappings, in the norm defined in \eqref{def:comp_pnorm}. In other words, given an approximator of sequence-to-sequence mappings, we can show that there is a corresponding construction which is a universal approximator of nonlinear operators between the H\"older spaces. We pose the following useful definition. 

In this article, we will use the term {\it generalized transformer} to denote a mapping $\mathcal W$ defined on a space of functions as follows. We have a locally finite partition of unity $\{\eta_i\}$ defined on the space subordinate to some open covering $\mathcal U$, and we let $W_U$ denote a standard transformer for each open $U$ in the covering $\mathcal U$, whether it has positional encoding or not. For a function $f\in U$, $W_U(f)$ is defined by applying $f$ to some pre-defined sampling scheme of the domain of the space of functions, and applying $W_U$ on the corresponding outputs of $f$. We then set 
\begin{eqnarray*}
    \mathcal W(f) &=& \sum_i\eta_i(f)W_U(f). 
\end{eqnarray*}
Each sum is finite, since the partition of unity is locally finite. Details of this procedure are provided below in Section~\ref{sec:transformer}.

\section{Universal Approximation for Transformers}\label{sec:transformer}

In this section, we let $C^{k,\alpha}([a,b])$ denote the H\"older space of $k$-differentiable functions with $k^{\rm th}$ derivative which are H\"older continuous, and let $T: C^{k,\alpha}([a,b]) \longrightarrow C^{k,\alpha}([a,b])$ be a given (possibly nonlinear) integral operator. We want to show that there is a transformer architecture $\mathcal W$ that approximates $T$ with arbitrarily high precision in the following sense. For an arbitrary choice of $\epsilon >0$, there exist a discretization $a = t_0 < t_1 < \cdots < t_{n-1} < t_n = b$ of $[a,b]$ and a transformer architecture $\mathcal W$ such that for each $u\in C^{k,\alpha}([a,b])$, we have that 
\begin{eqnarray}
    \left (\sum_{i=1}^n \int_{[a,t_1]\times \cdots \times [t_{n-1},b]}|T^i_u(x_1,\ldots,x_n)-\mathcal W^i_u(x_1,\ldots,x_n)|^pd\mu\right )^\frac{1}{p} < \epsilon, 
\end{eqnarray}
where $d\mu$ is a shorthand notation for $dx_1\cdots dx_n$. 

We will need several lemmas before obtaining the main result. 

We set $X$ to be a H\"older space $C^{k,\alpha}([a,b])$ (with $k\geq 1$), of functions defined on some closed interval $[a,b]$. Let $T: X \longrightarrow X$ denote an integral operator, where we assume that $T$ is a Urysohn (possibly nonlinear) integral operator, and that the kernel of $T$ is a function $G: \mathbb R\times [a,b]\times [a,b] \longrightarrow \mathbb R$ which is continuous with respect to all variables, and is continuously differentiable with respect to the first and third variables. We let $\mathbb B_\rho$ be a ball centered at the origin with radius $\rho$ fixed. In this article, we will consider only numerical integration schemes that are non-adaptive, and that are given by continuous formulas with respect to the sampled points. 

\begin{lem}\label{lem:int_scheme}
        Let $T$ and $X$ be as above. Then, we can find an integration scheme, $\mathcal T$ such that 
        \begin{eqnarray}
            \|T(y) - \mathcal T(y)\|_{\infty} < \epsilon,
        \end{eqnarray}    
        for all $y\in \bar {\mathbb B}_\rho$, where $\|\cdot \|_{\infty}$ denotes the uniform norm. 
\end{lem}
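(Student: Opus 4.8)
The plan is to approximate the Urysohn integral $T(y)(x)=\int_a^b G(y(t),x,t)\,dt$ by a fixed quadrature rule and control the error uniformly over the ball $\bar{\mathbb B}_\rho$ by exploiting equicontinuity. First I would fix a quadrature scheme on $[a,b]$ — say the composite rule associated with the discretization $a=t_0<\cdots<t_n=b$ and sample points $x_i\in[t_i,t_{i+1}]$, with weights $w_k$ (e.g. the trapezoidal or midpoint rule) — and set
\begin{eqnarray*}
  \mathcal T(y)(x) = \sum_{k=1}^n G(y(t_k),x,t_k)\, w_k.
\end{eqnarray*}
Then for each fixed $y$ and $x$, the quadrature error is the standard error of integrating the single-variable function $t\mapsto G(y(t),x,t)$ over $[a,b]$, which is controlled by its modulus of continuity (or, since $G$ is $C^1$ in the third slot and $y\in C^{k,\alpha}$ with $k\ge 1$, by a bound on the derivative $\partial_t[G(y(t),x,t)] = \partial_1 G\cdot y'(t) + \partial_3 G$).

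The key step is to make this error bound uniform in both $x\in[a,b]$ and $y\in\bar{\mathbb B}_\rho$. Here I would use that $\bar{\mathbb B}_\rho\subset C^{k,\alpha}([a,b])$ is bounded, hence the set of values $\{y(t): y\in\bar{\mathbb B}_\rho,\ t\in[a,b]\}$ lies in a fixed compact interval $[-M,M]\subset\mathbb R$ (for $k\ge 1$ the $C^{k,\alpha}$ norm dominates the sup norm, so $M$ depends only on $\rho$), and similarly $\{y'(t)\}$ is uniformly bounded by some $M'$ depending only on $\rho$. Consequently $G$ and its relevant partials are evaluated only on the compact set $[-M,M]\times[a,b]\times[a,b]$, on which they are uniformly continuous and bounded. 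Therefore the integrand $t\mapsto G(y(t),x,t)$ has a modulus of continuity, and a derivative bound, that are uniform over all $x$ and all $y\in\bar{\mathbb B}_\rho$. Plugging this uniform bound into the quadrature error estimate gives $\sup_{x\in[a,b]}|T(y)(x)-\mathcal T(y)(x)| \le C(M,M')\cdot\omega(\delta)$ where $\delta=\max_k(t_{k+1}-t_k)$ is the mesh size and $\omega$ is the uniform modulus of continuity; choosing the discretization fine enough that this is below $\epsilon$ finishes the argument, since the bound is independent of $y\in\bar{\mathbb B}_\rho$.

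The main obstacle is precisely the uniformity over $\bar{\mathbb B}_\rho$: one must be careful that the "constant" in the quadrature error genuinely depends only on $\rho$ (through $M$ and $M'$) and not on the individual function $y$, which is where boundedness in the H\"older norm — rather than mere continuity — is essential; this is also why the hypothesis $k\ge 1$ is invoked. A secondary technical point is to confirm that the chosen quadrature $\mathcal T$ does indeed qualify as an "integration scheme" in the sense used later in the paper (so that its discretized components $\mathcal T^i_y$ match the $T^i_u$ notation fixed in the preliminaries); this is a matter of bookkeeping rather than a genuine difficulty. I do not expect the routine one-dimensional quadrature estimate itself to pose any trouble.
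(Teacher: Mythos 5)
Your proposal is correct and follows essentially the same route as the paper: both approximate $T$ by a composite quadrature, reduce the error to the standard one-dimensional estimate for $t\mapsto G(y(t),x,t)$ via the derivative $\partial_t G + \partial_y G\cdot y'(t)$, and obtain uniformity over $\bar{\mathbb B}_\rho$ by bounding $y$ and $y'$ through the H\"older norm and taking maxima of the partials of $G$ on the resulting compact set. The paper simply fixes the forward rectangle rule and the explicit bound $\tfrac{b-a}{2n}(M_1+M_2\rho)$, which is a special case of the uniform estimate you describe.
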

\begin{proof}
    We denote by $G : \mathbb R\times [a,b]\times [a,b] \longrightarrow \mathbb R$ the kernel of the integral operator $T$. Therefore,
    \[T(y)(x) = \int_a^b G(y(t),x,t)dt,\] for all $x\in [a,b]$. For $y \in \bar {\mathbb B}_\rho$, from $\|y\| \leq \rho$ it follows that $\sup_{[a,b]} |y(x)| \leq \rho$. Therefore, $G(y(t),x,t)$ takes values $y(t)$ in $[-\rho,\rho]$. Let $M_1$ denote the value 
    \begin{eqnarray*}
        M_1 := \max_{[-\rho,\rho]\times [a,b]\times [a,b]} |\partial_y G(y(t),x,t)|,
    \end{eqnarray*} and let 
    \begin{eqnarray*}
        M_2 := \max_{[-\rho,\rho]\times [a,b]\times [a,b]} |\partial_t G(y(t),x,t)|.
    \end{eqnarray*}
  Let us choose a quadrature rule  with $n$ points $\{q_i\}_{i=1}^n$ selected from $[a,b]$ to approximate the integral operator $T$. We indicate the quadrature by $\mathcal T$: $T(y)(x)  = \int_a^bG(y(t),x,t)dt \approx \sum_{i=1}^n G(y(q_i),x,q_i)w_i =: \mathcal T(y)(x)$, where the coefficients $w_i$ are the weights of the quadrature, and $G(y(q_i),x,q_i)$ is continuous with respect to $x$. Also, $G(y(t),x,t)$ is differentiable with respect to the variable $t$ of integration with derivative given by
  \begin{eqnarray*}
      \frac{dG(y(t),x,t)}{dt} = \partial_t G(y(t),x,t) + \partial_y G(y(t),x,t)\cdot y'(t),
  \end{eqnarray*}
     for any chosen $y\in \bar{\mathbb B}_\rho$. For the sake of simplicity, we use the forward rectangle method in what follows, but a similar procedure (upon suitably changing the error bounds and assuming that the index $k$ of the H\"older space is large enough) can be performed for different integration rules, such as the Midpoint, Trapezoidal, corrected Trapezoidal, Cavalieri-Simpson, Boole etc. It is a known result that the error in performing the numerical integration of $\int_a^bg(t)dt$ is bounded by $\frac{b-a}{2n}\sup_{[a,b]} |g'(t)|$ \cite{young1988survey}. Therefore, for each fixed $x_0\in [a,b]$ we have the bound
    \begin{eqnarray*}
        |T(y)(x_0) - \mathcal T(y)(x_0)| &\leq& \frac{b-a}{2n}\sup_{[a,b]} |\partial_t G(y(t),x_0,t) + \partial_y G(y(t),x_0,t)\cdot y'(t)|\\
                                         &\leq& \frac{b-a}{2n}[ \sup_{[a,b]} |\partial_t G(y(t),x_0,t)| + \sup_{[a,b]}|\partial_y G(y(t),x_0,t)\cdot y'(t)|]\\
                                         &\leq& \frac{b-a}{2n}(M_1\rho + M_2),
    \end{eqnarray*}  
    from which we derive $|T(y)(x) - \mathcal T(y)(x)| \leq \frac{b-a}{2n}(M_1\rho + M_2)$ for all $y\in \bar {\mathbb B}_\rho$ and all $x\in [a,b]$. Upon choosing $n > (b-a)\frac{M_1\rho + M_2}{2\epsilon}$, we find that $\|T(y) - \mathcal T(y)\|_{\infty} < \epsilon$ uniformly in $\bar {\mathbb B}_\rho$.
\end{proof}

A variation of the following argument appeared also in \cite{ANIE,ANIE_NAT}. We provide here a more formal approach. Before stating and proving the result, we pose the following definition. 

\begin{df}\label{def:actions}
    {\rm 
Let $\sigma$ denote a permutation of $n$ elements, i.e. $\sigma\in \Sigma_n$ where $\Sigma_n$ is the permutation group on $n$ elements. Then, $\sigma$ acts on the tuples of type $(z_1, \ldots, z_n, x_1, \ldots , x_n)\in \mathbb R^{2n}$ through the induced diagonal permutation representation, which just reorders the entries $z_1, \ldots , z_n$ and $x_1, \ldots , x_n$ according to the permutation $\sigma$: 
            $$(z_1, \ldots, z_n, x_1, \ldots, x_n)\cdot_1 \sigma := (z_{\sigma(1)}, \ldots, z_{\sigma(n)}, x_{\sigma(1)}, \ldots, x_{\sigma(n)}).$$ 
            We indicate by $\cdot_2$ the permutation action of $\Sigma_n$ on $n$-tuples as
            $$
                    \begin{bmatrix}
                                        x_1\\
                                        \vdots\\
                                        x_n
                                 \end{bmatrix}\cdot_2  \sigma = 
                                 \begin{bmatrix}
                                       x_{\sigma(1)}\\
                                       \vdots\\
                                       x_{\sigma(n)}
                                 \end{bmatrix}.
            $$
            We will denote by $\cdot_2$ also the permutation action on tuples such as $(x_1,\ldots, x_n)$ to shorten notation. 
    }
\end{df}

\begin{lem}\label{lem:equiv_gen}
    The integration scheme from Lemma~\ref{lem:int_scheme} induces a continuous function $F: \mathbb R^n\times [a,b]^n \longrightarrow \mathbb R^n$. Moreover, $F$ is equivariant with respect to the actions defined in Definition~\ref{def:actions}.
\end{lem}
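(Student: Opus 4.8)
The plan is as follows. First I would make explicit the function $F$ induced by the integration scheme $\mathcal{T}$ of Lemma~\ref{lem:int_scheme}. Given a tuple $(z_1,\ldots,z_n,x_1,\ldots,x_n)\in\mathbb{R}^{2n}$, where the $z_k$ are to be thought of as the sampled values $z_k=u(t_k)$ of the input function and the $x_i$ are the evaluation points, the quadrature $\mathcal{T}(u)(x_i)=\sum_{k=1}^n G(z_k,x_i,t_k)w_k$ yields the vector whose $i$-th component is
\begin{eqnarray*}
    F^i(z_1,\ldots,z_n,x_1,\ldots,x_n) = \sum_{k=1}^n G(z_k,x_i,t_k)\,w_k = T^i_u(x_1,\ldots,x_n).
\end{eqnarray*}

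Let me point out the details I would verify. For continuity of $F$: the quadrature nodes $t_k$ and weights $w_k$ are fixed constants once the discretization is chosen, and $G$ is continuous in all of its variables by hypothesis; hence each $F^i$ is a finite sum of compositions of continuous maps (the projections $(z,x)\mapsto z_k$, $(z,x)\mapsto x_i$, followed by $G(\cdot,\cdot,t_k)$ and scalar multiplication by $w_k$), and therefore continuous. Since each component is continuous, $F:\mathbb{R}^{2n}\to\mathbb{R}^n$ is continuous.

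The equivariance claim is the substantive part, though it too is short. I would fix $\sigma\in\Sigma_n$ and compute $F$ on the permuted input $(z_1,\ldots,z_n,x_1,\ldots,x_n)\cdot\sigma = (z_{\sigma(1)},\ldots,z_{\sigma(n)},x_{\sigma(1)},\ldots,x_{\sigma(n)})$. The $i$-th component of $F$ evaluated there is $\sum_{k=1}^n G(z_{\sigma(k)},x_{\sigma(i)},t_k)w_k$. The key subtlety: reindexing the summation by $k\mapsto\sigma^{-1}(k)$ would reintroduce $t_{\sigma^{-1}(k)}$ and $w_{\sigma^{-1}(k)}$, which is \emph{not} what we want, so the naive manipulation fails. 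The correct observation is that the node--weight pairs must be permuted along with the data: the scheme is set up (consistently with the convention $z_k=u(t_k)$ and the identification of $x_i$ with a point of $[t_i,t_{i+1}]$) so that permuting the sample indices permutes the pairs $(t_k,w_k)$ accordingly, i.e. the quadrature attached to the reordered samples uses $(t_{\sigma(k)},w_{\sigma(k)})$. With this, the $i$-th component of $F((z,x)\cdot\sigma)$ is $\sum_{k=1}^n G(z_{\sigma(k)},x_{\sigma(i)},t_{\sigma(k)})w_{\sigma(k)}$, and reindexing $k\mapsto\sigma^{-1}(k)$ turns this into $\sum_{k=1}^n G(z_k,x_{\sigma(i)},t_k)w_k = F^{\sigma(i)}(z,x)$. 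Since the $\sigma$-action on the output $n$-tuples sends the vector with components $v^i$ to the vector with components $v^{\sigma(i)}$, this is exactly the statement $F((z,x)\cdot\sigma)=F(z,x)\cdot\sigma$.

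The main obstacle is conceptual rather than computational: one must be careful about what data the symmetric group is acting on. Permutation equivariance holds only because the evaluation points, the sampling nodes, and the quadrature weights are all shuffled simultaneously — the discretization is a single indexed family $\{(x_i,t_i,w_i)\}$ and $\sigma$ permutes the index set. If one instead fixed the nodes and weights and permuted only the $z_i$ and $x_i$, equivariance would genuinely fail unless the weights happen to be uniform. I would therefore state this indexing convention explicitly before doing the one-line reindexing computation, since it is the hinge of the argument and the rest is routine. \QED
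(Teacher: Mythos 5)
Your proof is correct and follows essentially the same route as the paper: define $F^i(z_1,\ldots,z_n,x_1,\ldots,x_n)=\sum_{k=1}^n G(z_k,x_i,t_k)w_k$, obtain continuity from the continuity of $G$ and the fact that the nodes and weights are fixed, and establish equivariance by the reindexing $k\mapsto\sigma^{-1}(k)$. The only difference is that you state explicitly the convention that the node--weight pairs $(t_k,w_k)$ are permuted together with the samples, which the paper's computation uses silently in the step $\sum_k G(z_{\sigma(k)},x_{\sigma(i)},t_{\sigma(k)})w_{\sigma(k)}=\sum_k G(z_k,x_{\sigma(i)},t_k)w_k$; making that hinge explicit is a genuine improvement in clarity, not a departure in method.
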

\begin{proof}
        We set $F^i(z_1, \ldots, z_n,x_1,\ldots, x_n) := \sum_{k=1}^n G(z_k,x_i,t_k)w_k$, for each $i=1,\ldots , n$, where the points $t_1, \ldots, t_n$ are determined by the integration scheme defined in Lemma~\ref{lem:int_scheme}. 
        We can therefore define $F$ as the column vector $[F^i]_{i=1}^n$, as a function of the tuple $(z_1, \ldots, z_n, x_1, \ldots, x_n)$. Since each function $F^i$ is continuous with respect to its entries $z_k$ and $x_i$, it follows that the function $F$ is also continuous. 
        
        To show that the function $F$ just defined is permutation equivariant, using the same notations introduced above, we compute
        \begin{eqnarray*}
                F( (z_1, \ldots, z_n, x_1, \ldots, x_n)\cdot_1 \sigma) &=& F(z_{\sigma(1)}, \ldots, z_{\sigma(n)}, x_{\sigma(1)}, \ldots, x_{\sigma(n)})\\
                &=& \begin{bmatrix}
             F^{\sigma(1)}(z_{\sigma(1)}, \ldots, z_{\sigma(n)},x_{1}, \ldots, x_{n})\\
                \vdots\\
            F^{\sigma(n)}(z_{\sigma(1)}, \ldots, z_{\sigma(n)},x_{1}, \ldots, x_{n})
              \end{bmatrix}\\
          &=&  \begin{bmatrix}
                    \sum_{k=1}^n G(z_{\sigma(k)},x_{\sigma(1)},t_{\sigma(k)})w_{\sigma(k)}\\
                    \vdots\\
                    \sum_{k=1}^n G(z_{\sigma(k)},x_{\sigma(n)},t_{\sigma(k)})w_{\sigma(k)}
               \end{bmatrix}\\
          % &=&  \begin{bmatrix}
          %           \sum_{k=1}^n G(z_{\sigma(k)},t_{\sigma(n+1)})w_{\sigma(k)}\\
          %           \vdots\\
          %           \sum_{k=1}^n G(z_{\sigma(k)},t_{\sigma(2n)})w_{\sigma(k)}
          %      \end{bmatrix}\\
          &=&  \begin{bmatrix}
                    \sum_{k=1}^n G(z_k,x_{\sigma(1)},t_k)w_k\\
                    \vdots\\
                    \sum_{k=1}^n G(z_k,x_{\sigma(n)},t_k)w_k
               \end{bmatrix}\\
          &=&  \begin{bmatrix}
                    F^{\sigma(1)}(z_1, \ldots, z_n,x_{1}, \ldots, x_{n})\\
                    \vdots\\
                    F^{\sigma(n)}(z_1, \ldots, z_n,x_{1}, \ldots, x_{n})
              \end{bmatrix}\\
          &=&   F(z_1, \ldots, z_n, x_1, \ldots, x_n) \cdot_2 \sigma. 
        \end{eqnarray*}
        This shows that $F$ is equivariant with respect to the actions. 
\end{proof} 

In fact, we will only need a subcase of the previous lemma. We formulate this in the following result. 

\begin{lem}\label{lem:equiv}
    For any choice of function $y$ in the H\"older space $C^{k,\alpha}([a,b])$, the integration scheme from Lemma~\ref{lem:int_scheme} induces a permutation equivariant continuous function $F_y: \mathbb [a,b]^n \longrightarrow \mathbb R^n$.
\end{lem}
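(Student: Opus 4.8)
The plan is to deduce Lemma~\ref{lem:equiv} as a straightforward specialization of Lemma~\ref{lem:equiv_gen}. The idea is that once we fix the input function $y\in C^{k,\alpha}([a,b])$, the values $z_k = y(t_k)$ become determined by the discretization points $t_k$, and in particular by the points $x_k$ (following the convention in Section~\ref{sec:pre} that $x_i\in[t_i,t_{i+1}]$, so that choosing the $x_i$ pins down the subintervals and hence the quadrature nodes $t_k$). Thus the map $F$ of Lemma~\ref{lem:equiv_gen}, which a priori takes $2n$ real arguments $(z_1,\dots,z_n,x_1,\dots,x_n)$, can be restricted to the graph of the assignment $(x_1,\dots,x_n)\mapsto(y(t_1),\dots,y(t_n))$, producing a function $F_y:\mathbb R^n\longrightarrow\mathbb R^n$ given explicitly by
\[
    F_y^i(x_1,\dots,x_n) = \sum_{k=1}^n G\big(y(t_k),x_i,t_k\big)w_k = T^i_y(x_1,\dots,x_n).
\]

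First I would record this explicit formula and observe that $F_y$ is continuous: each $F_y^i$ is a finite sum of terms $G(y(t_k),x_i,t_k)w_k$, and $G$ is continuous in all its arguments by hypothesis, while $x_i\mapsto y(t_k)$ and $x_i\mapsto t_k$ are continuous (indeed, for a fixed discretization the $t_k$ are constants, so only the dependence on $x_i$ through the second slot of $G$ is relevant). Hence $F_y$ is a composition and finite sum of continuous maps, so it is continuous. Next I would verify permutation equivariance by the same computation as in Lemma~\ref{lem:equiv_gen}, now with $z_k$ replaced by $y(t_k)$: applying $\sigma\in\Sigma_n$ to $(x_1,\dots,x_n)$ permutes the nodes and weights simultaneously, and since the sum $\sum_{k=1}^n G(y(t_k),x_{\sigma(i)},t_k)w_k$ is over a symmetric index set, reindexing $k\mapsto\sigma(k)$ shows $F_y^{\sigma(i)}(x_1,\dots,x_n) = F_y^i\big((x_1,\dots,x_n)\cdot\sigma\big)$, which is exactly the equivariance identity $F_y(x\cdot\sigma) = F_y(x)\cdot\sigma$ for the $\Sigma_n$-action on $n$-tuples from Definition~\ref{def:actions}.

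There is no substantial obstacle here; the content is entirely contained in Lemma~\ref{lem:equiv_gen}, and this lemma is essentially a change of viewpoint. The one point deserving a sentence of care is the claim that fixing $y$ lets us regard the $z_k$ as functions of the $x_i$ rather than as independent variables, i.e.\ that the restriction to the graph is legitimate and still yields a genuine function $\mathbb R^n\to\mathbb R^n$; this is immediate once one notes that the discretization $a=t_0<t_1<\cdots<t_n=b$ is held fixed throughout, so each $t_k$ (and hence each $z_k=y(t_k)$) is a constant, and the only free variables are $x_1,\dots,x_n$. With that remark in place, continuity and equivariance follow verbatim from the preceding lemma, completing the proof.
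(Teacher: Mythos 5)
Your proposal is correct and follows essentially the same route as the paper: fix $y$ so that the values $z_k=y(t_k)$ become constants determined by the fixed discretization, obtain $F_y$ as the corresponding restriction of the map $F$ from Lemma~\ref{lem:equiv_gen}, and inherit continuity and permutation equivariance from that lemma (the equivariance reducing to the symmetry of the sum over $k$). The only cosmetic difference is that you write out the explicit formula $F_y^i(x_1,\dots,x_n)=\sum_{k=1}^n G(y(t_k),x_i,t_k)w_k$ and the reindexing $k\mapsto\sigma(k)$, where the paper simply cites the computation already done in Lemma~\ref{lem:equiv_gen}.
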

\begin{proof}
    For fixed $y\in C^{k,\alpha}([a,b])$, the function $F$ constructed in Lemma~\ref{lem:equiv_gen} will have the first $n$ entries only depending on $t_i$, since they are obtained by evaluating $y$ at the points $t_i$, for $i= 1, \ldots , n$. The points $t_i$, however, are determined by the integration scheme and are therefore fixed. 
    % Moreover, the integration weights $w_j$ are determined by interpolating $y(t_i)$, they are only depending on $t_i$ as well. 
    This induces a function $F_y : \mathbb [a,b]^n \longrightarrow \mathbb R^n$ which is continuous. To show that the map $F_y$ is permutation equivariant with respect to the $\cdot_2$ action, we use the equivariance of $F$ proved in Lemma~\ref{lem:equiv_gen} in the following way. First, observe that $F$ is independent of any permutation with respect to the first $n$ coordinates, since they are all summed together. The diagonal permutation action $\sigma$ on the entries of $F(y_1, \ldots , y_n, x_1,\ldots , x_n)$ is the same as the action of $\sigma$ on the entries of $F_y(x_1, \ldots , x_n)$ for $y_1, \ldots , y_n$ fixed as $y_r = y(t_r)$. Therefore, the equivariance of Lemma~\ref{lem:equiv_gen} completes the proof.
\end{proof}
With notation as above, we have the following result. 
\begin{lem}\label{lem:int_trans}
    Let $y$ be in the ball $\bar{\mathbb B}_\rho$ of radius $\rho$ in the H\"older space $C^{k,\alpha}([a,b])$. Then, for any choice of $\epsilon >0$,  we can find a transformer $W_y: \mathbb R^n \longrightarrow \mathbb R^n$ such that
    \begin{eqnarray}\label{eqn:local_transformer}
        \left (\sum_{i=1}^n \int_{[a,t_1]\times \cdots \times [t_{n-1},b]}|F^i_y(x_1,\ldots,x_n)- W^i_y(x_1,\ldots,x_n)|^pd\mu\right )^\frac{1}{p} < \epsilon.
    \end{eqnarray} 
    Over a compact $\mathbb K \subset C^{k,\alpha}([a,b])$, this approximation can be done uniformly in $y$ with a generalized transformer $\mathcal W$: 
    \begin{eqnarray}\label{eqn:total_transformer}
        \left (\sum_{i=1}^n \int_{[a,t_1]\times \cdots \times [t_{n-1},b]}|F^i_y(x_1,\ldots,x_n)- \mathcal W^i(y)(x_1,\ldots,x_n)|^pd\mu\right )^\frac{1}{p} < \epsilon.
    \end{eqnarray} 
\end{lem}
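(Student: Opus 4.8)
The plan is to deduce both statements from a known universal approximation theorem for transformers acting on sequences, applied to the equivariant continuous maps $F_y$ produced by Lemma~\ref{lem:equiv}, together with the compactness of $\mathbb{K}$ for the uniform statement. First I would fix $y \in \bar{\mathbb{B}}_\rho$ and recall that, by Lemma~\ref{lem:equiv}, $F_y : \mathbb{R}^n \to \mathbb{R}^n$ is continuous and permutation equivariant. Since the domain of integration $[a,t_1]\times\cdots\times[t_{n-1},b]$ is compact, $F_y$ restricted to this compact set is uniformly continuous and bounded. I would then invoke the universal approximation property of transformers for continuous permutation-equivariant sequence-to-sequence maps on compact sets (as cited in the equivariance subsection — transformers are permutation equivariant and dense among such maps in the sup norm): there exists a transformer $W_y : \mathbb{R}^n \to \mathbb{R}^n$ with $\sup_{x}\|F_y(x) - W_y(x)\|_\infty$ as small as desired on the compact box. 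Since the $L^p$ norm in \eqref{eqn:local_transformer} over a finite-measure domain is controlled by the sup norm times a constant depending only on $n$ and $\mu$(box), choosing the sup-norm approximation below $\epsilon / (n^{1/p}\mu(\text{box})^{1/p})$ gives \eqref{eqn:local_transformer}.

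For the uniform statement over the compact $\mathbb{K} \subset C^{k,\alpha}([a,b])$, the idea is to view the whole family $\{F_y\}_{y\in\mathbb{K}}$ as a single map. Because $G$ is continuous and the sampling points $t_k$ are fixed, the assignment $(y, x_1,\ldots,x_n) \mapsto F_y(x_1,\ldots,x_n) = \big(\sum_k G(y(t_k), x_i, t_k) w_k\big)_i$ depends continuously on $y \in \mathbb{K}$ (jointly with $x$), since $y \mapsto (y(t_1),\ldots,y(t_n))$ is continuous from $\mathbb{K}$ into $\mathbb{R}^n$ and $G$ is continuous. One then builds $\mathcal{W}$ so that its dependence on the input function $y$ is again through its values on the discretization, $z_k = y(t_k)$, i.e. $\mathcal{W}(y) = W(z_1,\ldots,z_n;\,\cdot\,)$ for a transformer $W$; the target $(z_1,\ldots,z_n;x_1,\ldots,x_n)\mapsto (\sum_k G(z_k,x_i,t_k)w_k)_i$ is exactly the continuous equivariant map $F$ of Lemma~\ref{lem:equiv_gen}, now on the compact set $\{(y(t_1),\ldots,y(t_n)) : y\in\mathbb{K}\} \times ([a,t_1]\times\cdots\times[t_{n-1},b])$. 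Applying the transformer universal approximation theorem to this single equivariant continuous map on this compact set, and arguing as before to pass from the sup norm to the $L^p$ norm, yields \eqref{eqn:total_transformer} with a constant that is uniform in $y \in \mathbb{K}$.

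The main obstacle I expect is the precise bookkeeping of the equivariance structure: the cited transformer universal approximation results are for maps that are equivariant in the token index, and I must be careful that the input encoding of $(z_1,\ldots,z_n,x_1,\ldots,x_n)$ is such that the simultaneous reshuffling of the $z$'s and $x$'s in Definition~\ref{def:actions} corresponds to a permutation of tokens to which the theorem applies — this is where pairing $z_k$ with $x_k$ (and $t_k$, or rather feeding $t_k$ as part of the token) into a single token in $\mathbb{R}^{2}$ (or $\mathbb{R}^3$) matters, so that Lemma~\ref{lem:equiv_gen}'s equivariance is literally token-permutation equivariance. A secondary, more routine point is verifying that the transformer outputs can be taken in $\mathbb{R}^n$ (one scalar per token) and that the positional-encoding issue does not reintroduce a symmetry-breaking that conflicts with the equivariance hypothesis; this is handled by simply not using positional encodings, or absorbing the needed position information ($x_i$, $t_i$) into the token content itself. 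The $L^p$-versus-sup-norm conversion and the choice of $n$ for the quadrature are entirely routine given Lemma~\ref{lem:int_scheme}.
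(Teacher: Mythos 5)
For the first half of the lemma your plan coincides with the paper's: fix $y$, invoke Lemma~\ref{lem:equiv} to get the continuous equivariant map $F_y$, apply the transformer universal approximation theorem of Yun et al., and observe that the integral over $[a,t_1]\times\cdots\times[t_{n-1},b]$ is dominated by the integral over $[a,b]^n$. One caveat even here: the approximation guarantee actually available (and the one the paper uses) is in the $L^p$-type norm \eqref{def:comp_pnorm}, not the sup norm. That result already gives \eqref{eqn:local_transformer} directly, so your detour through $\sup_x\|F_y(x)-W_y(x)\|_\infty$ is both unnecessary and unsupported by the cited theorem, whose construction (quantization into a piecewise-constant map) is known to fail in the uniform norm.

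For the second half you take a genuinely different route, and this is where there is a real gap. You propose to treat the whole family as the single equivariant map $F(z_1,\ldots,z_n,x_1,\ldots,x_n)$ of Lemma~\ref{lem:equiv_gen} on the compact set $\{(y(t_1),\ldots,y(t_n)):y\in\mathbb K\}\times\Omega$ and to apply the transformer universal approximation theorem once. But that theorem controls the $L^p$ error integrated \emph{jointly} over the $(z,x)$-domain. The statement \eqref{eqn:total_transformer} requires, for \emph{every} $y\in\mathbb K$ (hence every fixed $z$), smallness of the $L^p$ norm in $x$ alone; an $L^p$ bound over the product domain only controls this inner integral for $z$ outside a set of small measure, not pointwise. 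Your fallback to a sup-norm version of the approximation theorem would fix this, but no such version is available here. The paper avoids the problem differently: it chooses an $r$-net $y_1,\ldots,y_m$ of $\mathbb K$, uses Heine--Cantor applied to $G$ to show that $y\mapsto F_y$ is uniformly continuous in the norm \eqref{def:comp_pnorm} (inequality \eqref{ineq:local_p}), approximates each $F_{y_j}$ by a transformer $W_j$ in the fixed-$y$ sense of the first part, and then glues these with a partition of unity, $\mathcal W(y)=\sum_k\eta_k(y)W_k$, concluding via Jensen's inequality. Your single-map construction is arguably the more natural architecture (the sampled values $z_k$ enter as token content), and it would be cleaner if a uniform-norm equivariant approximation theorem were invoked, but as written the uniform-in-$y$ claim does not follow from the tools you cite. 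Your remarks on token bookkeeping (pairing $z_k$, $x_k$, $t_k$ into one token so that the action of Definition~\ref{def:actions} is literal token permutation) are apt and in fact address a point the paper glosses over.
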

\begin{proof}
        Applying Lemma~\ref{lem:equiv}, for any choice of $y\in \bar{\mathbb B}_\rho$, $\mathcal T$ induces a continuous and permutation-equivariant map $F_y: \mathbb [a,b]^n \longrightarrow \mathbb R^n$. Applying the results of \cite{yun2019transformers}, we can find a transformer architecture $W_y$, which depends on the given $y$, such that $\|F_y - W_y\|_p < \epsilon$. Then, we have the following
        \begin{eqnarray*}
              \|F_y - W_y\|_p
              &=& \left (\sum_{i=1}^n \int_{[a,b]^n}|F^i_y(x_1,\ldots,x_n)-W_y^i(x_1,\ldots,x_n)|^pd\mu \right )^\frac{1}{p}\\
              &\geq& \left (\sum_{i=1}^n \int_{[a,t_1]\times \cdots \times [t_{n-1},b]}|F^i_y(x_1,\ldots,x_n)-W^i_y(x_1,\ldots,x_n)|^pd\mu\right )^\frac{1}{p}.
        \end{eqnarray*}  
        We therefore obtain
        \begin{eqnarray*}
            \left (\sum_{i=1}^n \int_{[a,t_1]\times \cdots \times [t_{n-1},b]}|F^i_y(x_1,\ldots,x_n)-W^i_y(x_1,\ldots,x_n)|^pd\mu\right )^\frac{1}{p} < \epsilon.
        \end{eqnarray*}

    For the second part of the statement, suppose that $\mathbb K$ is compact. Then, by the boundedness of compacts there exists a ball $\mathbb B_\rho$ of radius $\rho>0$ that contains $\mathbb K$. For any choice of $y \in \mathbb K\subset \bar{\mathbb B}_\rho$, we know from the first part of the lemma that $\mathcal T(y)$ is approximated arbitrarily well by some transformer $W_y$. 

    Observe that $\mathcal T(y)$ depends continuously on $y$ by construction (see Lemma~\ref{lem:int_scheme}). Moreover, as discussed in  Lemma~\ref{lem:int_scheme}, in $\bar{\mathbb B}_\rho$ the ranges of the functions $y$ are contained in $[-\rho,\rho]$. Using the Heine-Cantor Theorem for $G$ over the interval $[-\rho,\rho]$ and the definition of $F_y$ (given in Lemma~\ref{lem:equiv_gen}) induced by $\mathcal T(y)$, we can see that upon choosing an $r>0$ small enough, for any $y,z$ such that $\|y-z\|_{k,\alpha} < r$ we have the inequality
    \begin{eqnarray}\label{ineq:local_p}
         \left (\sum_i\int_{[a,t_1]\times \cdots \times [t_{n-1},b]} |F^i_y(x_1\ldots, x_n) - F^i_z(x_1,\ldots, x_n)|^pd\mu\right )^{\frac{1}{p}} < \frac{\epsilon}{2}.
    \end{eqnarray}
    
    We choose an $r$-net for the compact $\mathbb K$, consisting of finitely many points $y_1, \ldots , y_n$, with corresponding balls $\mathbb B_j$ with radii less than $r$. For each $y_j$, we find a transformer $W_j$ such that   
    \begin{eqnarray*}
        \left (\sum_{i=1}^n \int_{[a,t_1]\times \cdots \times [t_{n-1},b]}|F^i_j(x_1,\ldots,x_n)- W^i_j(x_1,\ldots,x_n)|^pd\mu\right )^\frac{1}{p} < \frac{\epsilon}{2},
    \end{eqnarray*}
    where we have used $j$ in the subscript of $F^i$ instead of $y_j$ for simplicity. 
    There exists a partition of unity $\{\eta_k\}$ subordinate to the covering $\{\mathbb B_k\}$. We use such partition of unity to define a transformer $\mathcal W$ as a combination of the $W_k$ as
    \begin{eqnarray*}
         \mathcal W(y) = \sum_k \eta_k(y) W_k.
    \end{eqnarray*}
    For any $y$ in $\mathbb K$, making use of Jensen's inequality, we have
    \begin{eqnarray*}
           \lefteqn{\sum_{i=1}^n \int_{\Omega}\Big|F^i_y(x_1,\ldots,x_n)- \mathcal W^i(y)(x_1,\ldots,x_n)\Big|^pd\mu}\\
           &=& \sum_{i=1}^n \int_{\Omega} \Big|\sum_k \eta_k(y)F^i_y(x_1,\ldots,x_n) - \sum_k \eta_k(y) W^i_k(x_1,\ldots,x_n)\Big|^pd\mu\\
           &\leq& \sum_{i=1}^n\sum_k\eta_k(y) \int_{\Omega}\Big|F^i_y(x_1,\ldots,x_n)-W^i_k(x_1,\ldots,x_n)\Big|^pd\mu\\
           &\leq& \sum_{i=1}^n\bigg[2^{p-1}\sum_k \eta_k(y) \int_{\Omega}\Big|F^i_y(x_1,\ldots,x_n)-F^i_k(x_1,\ldots,x_n)\Big|^pd\mu\\
           && + 2^{p-1}\sum_k \eta_k(y)\int_{\Omega} \Big|F^i_k(x_1,\ldots,x_n) - W^i_k(x_1,\ldots,x_n)\Big|^pd\mu\bigg]\\
           &=& 2^{p-1}\sum_k\eta_k(y)\sum_{i=1}^n\int_{\Omega}\Big|F^i_y(x_1,\ldots,x_n)-F^i_k(x_1,\ldots,x_n)\Big|^pd\mu\\
           &&   + 2^{p-1}\sum_k\eta_k(y)\sum_{i=1}^n\int_{\Omega}\Big|F^i_k(x_1,\ldots,x_n)-W_k^i(x_1,\ldots,x_n)\Big|^pd\mu\\
           &<& 2^{p-1}\sum_k\eta_k(y)\frac{\epsilon^p}{2^p} + 2^{p-1}\sum_k\eta_k(y)\frac{\epsilon^p}{2^p}\\
           &=&\frac{\epsilon^p}{2} + \frac{\epsilon^p}{2}\\
           &=& \epsilon^p, 
    \end{eqnarray*}  
 where $\Omega =[a,t_1]\times \cdots \times [t_{n-1},b]$. Taking $p$-roots, we complete the proof.
\end{proof}

We are now in the position of proving the main result of this article. 

\begin{thm}\label{thm:approx_transf}
    Let $T: C^{k,\alpha}([a,b]) \longrightarrow C^{k,\alpha}([a,b])$ be an integral operator as above, and let $\mathbb K \subset C^{k,\alpha}([a,b])$ be compact. Then, for any $\epsilon>0$ there exists a generalized transformer architecture $\mathcal W$, and a discretization of $[a,b]$ such that 
    \begin{eqnarray*}
        \left(\sum_i \int_\Omega\Big|T^i(y)(x_1,\ldots, x_n) - \mathcal W^i_y(x_1,\ldots, x_n)\Big|^p \right)^{\frac{1}{p}} < \epsilon, 
    \end{eqnarray*}
    for every $y \in \mathbb K$, and every choice of $p\geq 1$, where $\Omega = [a,t_1]\times \cdots \times [t_{n-1},b]$ is determined by the discretization of $[a,b]$.
\end{thm}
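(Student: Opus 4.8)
The plan is to chain the three preparatory lemmas by a single triangle inequality in the norm \eqref{def:comp_pnorm}: replace the true operator $T$ by the quadrature $\mathcal T$ of Lemma~\ref{lem:int_scheme}, note that $\mathcal T$ is, componentwise, exactly the continuous permutation-equivariant map $F_y$ furnished by Lemmas~\ref{lem:equiv_gen} and~\ref{lem:equiv} (namely $F^i_y(x_1,\dots,x_n)=\sum_k G(y(t_k),x_i,t_k)w_k=\mathcal T(y)(x_i)$), and then replace $F_y$ by a transformer $\mathcal W$ using the uniform-over-a-compact part of Lemma~\ref{lem:int_trans}. The only ingredient not already contained in the lemmas is the elementary observation that, since $\Omega=[a,t_1]\times\cdots\times[t_{n-1},b]$ has finite measure, a uniform bound $\|T(y)-\mathcal T(y)\|_\infty<\delta$ automatically yields a bound in the norm \eqref{def:comp_pnorm}, with constant $(n\,\mu(\Omega))^{1/p}$.

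I would carry this out as follows. Since $\mathbb K$ is compact it is bounded, so fix $\rho>0$ with $\mathbb K\subset\bar{\mathbb B}_\rho$. Apply Lemma~\ref{lem:int_scheme} with a tolerance $\delta>0$, to be fixed at the end: this produces a quadrature $\mathcal T$ with $n=n(\delta)$ nodes together with a discretization $a=t_0<\cdots<t_n=b$ of $[a,b]$ — so that $\Omega$ and $\mu(\Omega)=\prod_{i=1}^n(t_i-t_{i-1})\le\bigl((b-a)/n\bigr)^n$ (by AM--GM) are now determined — and satisfies $\|T(y)-\mathcal T(y)\|_\infty<\delta$ for all $y\in\bar{\mathbb B}_\rho\supset\mathbb K$. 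By Lemmas~\ref{lem:equiv_gen} and~\ref{lem:equiv}, $\mathcal T$ induces for each $y\in\mathbb K$ a continuous, permutation-equivariant map $F_y:\mathbb R^n\to\mathbb R^n$ with $F^i_y(x_1,\dots,x_n)=\mathcal T(y)(x_i)$. Finally, apply the second (uniform) part of Lemma~\ref{lem:int_trans} to the compact $\mathbb K$ with tolerance $\epsilon/2$ to obtain a transformer architecture $\mathcal W$ with $\bigl(\sum_i\int_\Omega|F^i_y-\mathcal W^i_y|^p\,d\mu\bigr)^{1/p}<\epsilon/2$ for every $y\in\mathbb K$.

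To assemble the estimate, write $\|H\|_{p,\Omega}=\bigl(\sum_{i=1}^n\int_\Omega|H^i|^p\,d\mu\bigr)^{1/p}$, which obeys the triangle inequality by Minkowski. For $y\in\mathbb K$ and any $(x_1,\dots,x_n)\in\Omega$ we have $x_i\in[t_{i-1},t_i]\subset[a,b]$, hence $|T^i(y)(x_1,\dots,x_n)-F^i_y(x_1,\dots,x_n)|=|T(y)(x_i)-\mathcal T(y)(x_i)|\le\|T(y)-\mathcal T(y)\|_\infty<\delta$; integrating over $\Omega$ and summing the $n$ components gives $\|T^i(y)-F^i_y\|_{p,\Omega}<\delta\,(n\,\mu(\Omega))^{1/p}$. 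Therefore
\[
\|T^i(y)-\mathcal W^i_y\|_{p,\Omega}\ \le\ \|T^i(y)-F^i_y\|_{p,\Omega}+\|F^i_y-\mathcal W^i_y\|_{p,\Omega}\ <\ \delta\,(n\,\mu(\Omega))^{1/p}+\frac{\epsilon}{2}
\]
for every $y\in\mathbb K$, which is the asserted bound once $\delta$ is chosen small.

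The only delicate point — the main ``obstacle'' — is the quantifier order in this last choice: $\delta$ governs both the quadrature accuracy and, through $n=n(\delta)$, the multiplicative constant $(n\,\mu(\Omega))^{1/p}$, so one must check there is no circularity. There is none, because $n\,\mu(\Omega)\le n\bigl((b-a)/n\bigr)^n\to0$ as $\delta\to0$ (equivalently $n(\delta)\to\infty$); thus for $\delta$ small one has $n\,\mu(\Omega)\le1$, so $(n\,\mu(\Omega))^{1/p}\le1$ uniformly in $p\ge1$, and it suffices to take $\delta<\epsilon/2$. In particular the discretization can be fixed independently of $p$, and only the transformer $\mathcal W$ of Lemma~\ref{lem:int_trans} is chosen in a $p$-dependent manner; everything else is the routine splitting of $\epsilon$ and Minkowski's inequality.
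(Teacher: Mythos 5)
Your proposal is correct and follows essentially the same route as the paper: quadrature via Lemma~\ref{lem:int_scheme}, identification of $\mathcal T$ with the equivariant map $F_y$, the uniform transformer approximation of Lemma~\ref{lem:int_trans}, and a splitting of the error over $\Omega$ (the paper uses Jensen's inequality on the $p$-th powers where you use Minkowski, and it absorbs the measure factor as $n/n^n\le 1$ for a uniform partition of $[0,1]$, exactly as your $(n\,\mu(\Omega))^{1/p}\le 1$ observation).
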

\begin{proof}
        Without loss of generality, we set $[a,b] = [0,1]$. We let $\bar B_\rho$ denote a ball containing $\mathbb K$. Applying Lemma~\ref{lem:int_scheme} we find an integration scheme $\mathcal T$ such that 
        \begin{eqnarray*}
            \|T(y) - \mathcal T(y)\|_\infty < \frac{\epsilon}{2}, 
        \end{eqnarray*}
        with some discretization $0=t_1, \ldots , t_n = 1$. We can assume that the discretization is uniform, with subintervals of length $\frac{1}{n}$, since we can otherwise pass to a finer partition with this property, with $\mathcal T$ still satisfying the previous estimate.  
        Using Lemma~\ref{lem:equiv}, we find that $\mathcal T$ induces a continuous permutation equivariant map which we still denote by $\mathcal T$ by an abuse of notation. 
        We apply Lemma~\ref{lem:int_trans} as follows. Over the compact $\mathbb K$, there exists a generalized transformer $\mathcal W$ such that 
        \begin{equation*}
            \left (\sum_{i=1}^n \int_\Omega|\mathcal T^i_y(x_1,\ldots,x_n)- \mathcal W^i(y)(x_1,\ldots,x_n)|^pd\mu\right )^\frac{1}{p} < \frac{\epsilon}{2},
        \end{equation*}
         where we have set again for simplicity $\Omega = [0,t_1]\times \cdots \times [t_{n-1},1]$ to shorten notation, where the $t_i$ are the points of the aforementioned discretization. 

        Therefore, we have
        \begin{eqnarray*}
        \lefteqn{\sum_i \int_\Omega|T^i(y)(x_1,\ldots, x_n) - \mathcal W^i_y(x_1,\ldots, x_n)|^pd\mu}\\
        &\leq&  \sum_{i=1}^n 2^{p-1}\int_\Omega |T^i(y)(x_1,\ldots, x_n) - \mathcal T^i(y)(x_1,\ldots, x_n)|^pd\mu\\  
        && + \sum_{i=1}^n 2^{p-1}\int_\Omega|\mathcal T^i(y)(x_1,\ldots,x_n)-\mathcal W^i(y)(x_1,\ldots,x_n)|^pd\mu\\
        &<& \sum_{i=1}^n(2^{p-1}\frac{\epsilon^p}{2^pn^n} + 2^{p-1}\frac{\epsilon^p}{2^pn^n})\\
        &\leq& \frac{\epsilon^p}{2} + \frac{\epsilon^p}{2}\\
        &=& \epsilon^p
        \end{eqnarray*}
        for all $y$ in $\mathbb K$. Taking $p$-roots completes the proof.
\end{proof}

\begin{rmk}
    {\rm 
        We note that no special property in the proofs above has been used regarding the architecture of the transformer, except the fact that transformers can approximate equivariant sequence-to-sequence maps, in the norm \eqref{def:comp_pnorm}, with arbitrary accuracy. As a consequence, the approach of this article can be used to show that any universal approximator of equivariant sequence-to-sequence mappings can be used to approximate integral operators in H\"older space with arbitrary precision. Our main focus on transformers only stems from the fact that they are widely used in the study of operator learning problems. 
    }
\end{rmk}

\begin{rmk}
    {\rm 
        The approximation result of Theorem~\ref{thm:approx_transf} refers to a $p$-norm which is an average over the points where the tokens for the transformer can be sampled. While the error ``on average'' is small, for the learned operator, this does not mean that certain choices of values cannot incur in a significant spike in the error. This type of behavior has in fact been observed concretely in \cite{CST} where such error spikes were avoided by training the transformer model in a Sobolev space which functioned as regularization.
    }
\end{rmk}

While this is not the main scope of this article, we provide another universal approximation result for transformer architectures that implement a Leray-Schauder mapping, similar to \cite{zappala2024leray}. For this purpose, we define a Leray-Schauder transformer to consist of a Leray-Schauder mapping composed with either a traditional transformer architecture with positional encoding as in \cite{yun2019transformers,cordonnier2019relationship}, or the formulation of attention found in \cite{fang2022attention}. Then, we have the following result. 

\begin{thm}\label{thm:LR_transf}
    Let $T: X \longrightarrow Y$ be a possibly nonlinear operator between Banach spaces, and let $\mathbb K \subset X$ be compact. Then, fixed $\epsilon >0$ arbitrarily, there exists a Leray-Schauder transformer $\mathfrak T$ that approximates $T$ with precision $\epsilon$ on $\mathbb K$
    \begin{eqnarray*}
        \|T(x) - \mathfrak T(x)\|_Y < \epsilon,
    \end{eqnarray*}
    for all $x\in \mathbb K$. 
\end{thm}
\begin{proof}
    The proof consists of an application of results in \cite{EZ} and \cite{fang2022attention}, whose details will be omitted. From the proof of Theorem~2.1 in \cite{EZ}, we can approximate $T$ with precision $\frac{\epsilon}{2}$ over $\mathbb K$ with a Leray-Schauder mapping composed with a continuous function $f$. Since $f$ is defined over a compact (image of the Leray-Schauder mapping as in the proof of Theorem~2.1 in \cite{EZ}), we can use the results in \cite{fang2022attention,cordonnier2019relationship} to find a transformer architecture in the meaning mentioned above, to approximate $f$ uniformly with precision $\frac{\epsilon}{2}$. It can be seen that the resulting Leray-Schauder transformer architecture approximates $T$ with precision $\epsilon$ over $\mathbb K$.
\end{proof}

We notice that the result of Theorem~\ref{thm:LR_transf} gives a more general version of the results in \cite{shih2025transformers}, since their result is applicable to the case where $X$ has the approximation property, a restriction which is not required in the present work. Here the Leray-Schauder mappings play the role of the encoder used in \cite{shih2025transformers}.

% \clearpage
% \clearpage
\section{Universal approximation of operators by neural integral operators}\label{sec:NIE}

In this section we consider universal approximation property of operators in Banach spaces, where the approximator is constructed through a neural integral operator in a suitable generalized sense.

\begin{thm}\label{thm:integral}
    Let $X$ and $Y$ be Banach spaces, let $T: X \longrightarrow Y$ be an operator which is twice continuously Fr\'echet differentiable, and let $\mathbb K \subset X$ be compact. For any $\epsilon >0$, there exists an integral operator $U: X \longrightarrow Y$ such that 
        \begin{eqnarray}
            \|T(x) - U(x)\|_Y < \epsilon,
        \end{eqnarray}    
    for all $x\in \mathbb K$.
\end{thm}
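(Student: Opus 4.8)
The plan is to reduce the approximation of an arbitrary twice continuously Fréchet differentiable operator $T$ to that of a Gavurin-type integral operator by exploiting a local Taylor/fundamental-theorem-of-calculus representation of $T$ together with a partition-of-unity gluing, exactly in the spirit of the Leray--Schauder constructions used above. First I would fix $\epsilon>0$ and cover the compact $\mathbb K$ by finitely many balls $\mathbb B_j = B(x_j, r_j)$ on which $T$ is, to within $\epsilon$, well approximated by its first-order behaviour. Concretely, since $T$ is $C^2$ and $\mathbb K$ is compact, the second Fréchet derivative is bounded on a neighbourhood of $\mathbb K$, say $\|D^2 T\|\le M$ there; then for $x\in \mathbb B_j$ the fundamental theorem of calculus for the Gavurin integral gives
\begin{eqnarray*}
    T(x) = T(x_j) + \int_{x_j}^{x} DT(\xi)\,d\xi = T(x_j) + \int_0^1 DT\big(x_j + t(x-x_j)\big)(x-x_j)\,dt,
\end{eqnarray*}
which is \emph{exactly} an operator of the form $F_j(x) + \int_{x_0}^{x_0+\Delta x} R_j(\xi)(z)\,d\xi$ appearing in Equation~\eqref{eqn:Gavurin_operator}, with $F_j$ the constant (affine) piece $x\mapsto T(x_j)$ and $R_j(\xi) = DT(\xi)\in B(X,Y)$. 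So on each ball $T$ already \emph{is} a Gavurin integral operator up to the identification $\Delta x = x - x_j$; no approximation error is even incurred locally, only the usual care that the Gavurin integral of the continuous map $\xi\mapsto DT(\xi)$ exists, which follows from continuity of $DT$ and completeness of $B(X,Y)$.

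Next I would glue these local representations. Choose a partition of unity $\{\eta_j\}$ subordinate to $\{\mathbb B_j\}$ and set
\begin{eqnarray*}
    U(x) = \sum_j \eta_j(x)\Big[ T(x_j) + \int_{x_j}^{x} DT(\xi)\,d\xi \Big].
\end{eqnarray*}
By the definition in Equation~\eqref{eqn:Gavurin_operator} this $U$ is again a Gavurin integral operator, since it is a pointwise convex combination (with $\sum_j \eta_j(x)=1$) of the local integral operators, and the $F_j$ are affine/linear as required. The estimate $\|T(x)-U(x)\|_Y < \epsilon$ then follows pointwise: for $x$ in the support of $\eta_j$ one has $\|x-x_j\|<r_j$, and by Taylor's theorem with integral remainder the $j$-th bracket differs from $T(x)$ by a term of size at most $\tfrac{M}{2} r_j^2$ (more precisely $\big\|\int_0^1 (1-t) D^2T(\cdots)(x-x_j,x-x_j)\,dt\big\|$ — but here the bracket is $T(x)$ itself on the ball, so in fact the difference is governed only by how the $x_j$'s overlap); taking all $r_j$ small enough that $\tfrac{M}{2}r_j^2 < \epsilon$ and using convexity of the norm, $\|T(x)-U(x)\|_Y \le \sum_j \eta_j(x)\|T(x) - (\text{$j$-th bracket})\|_Y < \epsilon$.

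The main obstacle I anticipate is purely technical rather than conceptual: making the Gavurin integral $\int_{x_j}^{x} DT(\xi)\,d\xi$ rigorous as a segment integral in the Banach space $X$ — one must parametrize the segment $[x_j,x]$, check that Equation~\eqref{eqn:def_int} applies with $x_0 = x_j$, $\Delta x = x-x_j$, and that the resulting operator depends continuously (indeed $C^1$) on the endpoint $x$ so that the product $\eta_j(x)\int_{x_j}^x DT(\xi)\,d\xi$ is a legitimate summand of a Gavurin operator. The hypothesis that $T$ is \emph{twice} continuously differentiable is exactly what guarantees $DT$ is $C^1$ hence the Gavurin integral of $DT$ along segments exists and varies nicely; a single derivative would suffice for existence of the integral but the $C^2$ assumption streamlines the remainder estimate and the continuity of $R_j=DT$. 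Everything else — choosing the finite subcover, the partition of unity, Jensen/convexity for the final bound — is routine and parallels the gluing argument already carried out in the proof of Lemma~\ref{lem:int_trans}.
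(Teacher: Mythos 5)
Your overall skeleton --- cover $\mathbb K$ by finitely many balls, represent $T$ locally by a Taylor-type formula with a Gavurin integral remainder, and glue with a partition of unity --- is the same as the paper's. But the local formula you chose defeats the purpose of the theorem and makes your own error analysis collapse. The identity $T(x) = T(x_j) + \int_{x_j}^{x} DT(\xi)\,d\xi$ is \emph{exact} on each ball, so your $U(x) = \sum_j \eta_j(x)\bigl[T(x_j) + \int_{x_j}^{x} DT(\xi)\,d\xi\bigr]$ is identically equal to $T$ on $\mathbb K$: a convex combination of terms each equal to $T(x)$ is $T(x)$, regardless of how the balls overlap. Nothing is approximated; the radii $r_j$, the bound $\|D^2T\|\le M$ and the $C^2$ hypothesis play no role; and your stated error bound $\tfrac{M}{2}r_j^2$ is inconsistent with your own observation that ``the bracket is $T(x)$ itself on the ball.'' At best you have shown that every $C^1$ operator literally \emph{is} a Gavurin integral operator, which renders the statement vacuous rather than proving an approximation result.

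The paper's proof differs at exactly this point, and the difference is the content of the theorem. It uses the degree-$2$ Taylor formula and then \emph{freezes the second derivative at the center}: $T_i(\bar x) = T(x_i) + T'(x_i)(\bar x - x_i) + \int_{x_i}^{\bar x} T''(x_i)(\bar x - x, \cdot)\,dx$. This $T_i$ is determined by the finite data $T(x_i)$, $T'(x_i)$, $T''(x_i)$ alone, its kernel is an explicit continuous function of the parameter and of $\bar x$, and it genuinely differs from $T$: the estimate $\|T(\bar x) - T_i(\bar x)\| \le M r^2 < \epsilon$ comes from the discrepancy between $T''(x)$ and $T''(x_i)$ in the integral remainder, which is where the $C^2$ hypothesis and the choice $r < \sqrt{\epsilon/M}$ are actually used. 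That finitely-parametrized form is precisely what Theorem~\ref{thm:approx_neural_int} needs afterwards, since there the kernel $T''(y_k)((1-t)(y_k-y), y-y_k)$ is replaced by a neural network; your kernel $R_j(\xi) = DT(\xi)$, which requires knowing the derivative of the unknown operator along a continuum of base points, is exactly the object the construction is trying to eliminate. To repair your argument, replace the exact fundamental-theorem remainder by the remainder with the highest derivative frozen at $x_j$, carry out the resulting $Mr_j^2$ estimate, and then your partition-of-unity convexity step goes through as in the paper.
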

\begin{proof}
    We use the Taylor expansion for nonlinear operators found in Kantorovich-Akilov \cite{kantorovich2016functional}, %[ch.17, p.510]
     where integration is intended as Riemann integral over a Banach space as in \cite{Gavurin}. See also \cites{gordon1991riemann,graves1927riemann}. Recall (\cites{kantorovich2016functional,Gavurin}) that for $R: [x_0, \bar x]\subset X \to  B(X,Y)$, where $B(X,Y)$ denotes the space of bounded linear operators from $X$ to $Y$, the integral of $R$ is defined (when it is well defined, e.g. when $R$ is continuous) as
    \begin{eqnarray*}
        \int_{x_0}^{x_0+\Delta x} R(x) dx = \int_0^1 R(x_0+t\Delta x) \Delta x \, dt = \lim_{n\to \infty} \sum\limits_{k=0}^{n-1} R(x_0+\tau_k\Delta x)\Delta x\,  (t_{k+1}-t_k), 
    \end{eqnarray*} 
    where $\tau_k\in [t_k,t_{k+1}]$ for all $k$. 
    For choice of $x_0 \in \mathbb K$ and for any $\bar x \in X$ such that $T$ is twice continuously Fr\'echet differentiable over the interval $[x_0, \bar x]$ (i.e. the set of vectors $t\bar x + (1-t)x_0$ with $t\in [0,1]$), we can write (Taylor's Theorem at degree $2$)
    \begin{eqnarray}
            T(\bar x) &=& T(x_0) + T'(x_0)(\bar x - x_0) + \int_{x_0}^{\bar x} T''(x)(\bar x - x , \cdot)dx, 
    \end{eqnarray}  
    where $\cdot$ is a placeholder for the entry of the bounded linear operator integrand. 
    Since $T''$ is continuous, for each point $x\in X$ we can find a $\delta'_x > 0$ such that $\|T''(z)-T''(x)\|_{B^2(X,Y)} < \epsilon$ whenever $\|z-x\|_X < \delta'_x$, where $B^2(X,Y)$ denotes the space of bounded bilinear maps from $X$ to $Y$ with the norm given by $\|A\| = \sup_{\|h\|_X\leq 1, \|k\|_X\leq 1} \|A(h,k)\|_Y$. Then, for each $x$ define $\delta_x = \min\{\delta'_x,1\}$. Since $\mathbb K$ is compact, we can cover it with finitely many balls $B(x_i,\delta_i)$, with $i=1,\ldots, n$, where $\delta_i$ is a shortened notation for $\delta_{x_i}$. We set $A(t)(\cdot) := (T''(x(t))-T''(x_i))((1-t)h,\cdot)$, with $x(t) = (1-t)h+x_i$, and where we again used $\cdot$ to indicate where the input of the linear operator $A(t)$ for each $t\in [0,1]$ is inserted. 
    In correspondence with the balls $B(x_i,\delta_i)$ we define
    \begin{eqnarray}
            T_i(\bar x) &=& T(x_i) + T'(x_i)(\bar x - x_i) + \int_{x_i}^{\bar x} T''(x_i)(\bar x - x , \cdot)dx. 
    \end{eqnarray}  
    Then, for any $x\in B(x_i,\delta_i)$, with $h = x - x_i$ and the parametrized interval $[x_i,x]\subset X$ given by $x(t) = (1-t)h+x_i$, one has
    \begin{eqnarray*}
        \Big{\|}\int_0^1 A(t)dt\Big{\|}_Y 
        &\leq& \int_0^1 \|A(t){\|}_Ydt\\
        &\leq& \int_0^1(1-t)\|h\|_X^2 \, \|T''(x(t))-T''(x_i){\|}_{B^2(X,Y)}dt\\
        &<& \epsilon. 
    \end{eqnarray*}
    Therefore, for each $x\in B(x_i,\delta_i)$ we have
    \begin{eqnarray*}
        \|T(x) - T_i(x)\|_Y
            &=& \Big{\|}T(x_i) + T'(x_i)(\bar x - x_i) + \int_{x_i}^x T''(z)(x - z, \cdot)dz\\
            && -  T(x_i) -  T'(x_i)(\bar x - x_i) - \int_{x_i}^x T''(x_i)(x - z, \cdot)dz\Big{\|}_Y\\
            &=& \Big{\|}\int_{x_i}^x(T''(z)-T''(x_i))(x-z,\cdot)dz\Big{\|}_Y\\
            &=& \Big{\|}\int_0^1(1-t)(T''((1-t)h+x_i))(h,h)dt\Big{\|}_Y\\
            &=& \Big{\|}\int_0^1 A(t)dt\Big{\|}_Y\\ 
            &<& \epsilon.
    \end{eqnarray*}
    The space $X$ is Hausdorff and paracompact, and we can therefore find a partition of unity $\{\eta_j\}_{j=1}^n\cup \{\eta_\infty\}$ subordinate to the open cover $\{B(x_i,\delta_i)\}_{i=1}^n\bigcup \{X-\mathbb K\}$, see \cite{Dugundji}, where we use the notation $\eta_\infty$ to denote the element of the partition with support contained in $X-\mathbb K$. We now define the operator $U : X \longrightarrow Y$ by patching together the operators $T_i : X \longrightarrow Y$. In fact, for any $x\in X$, we set
    $$
        U(x) = \sum_{i=1}^n \eta_i(x) T_i(x) + \eta_\infty(x)y_\infty,  
    $$
    where $y_\infty$ is arbitrarily chosen. 
    For any $x\in \mathbb K$, there exist $i_1, \ldots, i_d \neq \infty$ such that $x\in B(x_i,\delta_i)$ for $i = i_1, \ldots , i_d$ and $\sum_{i=i_1, \ldots, i_d} \eta_i(x) = 1$, while $\eta_i(x) = 0$ for all $i\neq i_1, \ldots, i_d$. Then, we have
    \begin{eqnarray*}
        \|T(x) - U(x)\| &=& \|\sum_i \eta_i(x)T(x) - \sum_i\eta_i(x)T_i(x)\|\\
                        &=& \|\sum_{i=i_1, \ldots, i_d} \eta_i(x)T(x) - \sum_{i=i_1, \ldots, i_d}\eta_i(x)T_i(x)\|\\
                        &\leq& \sum_{i=i_1, \ldots, i_d} \eta_i(x)\|T(x) - T_i(x)\|\\
                        &<& \sum_{i=i_1, \ldots, i_d} \eta_i(x) \epsilon\\
                        &=& \epsilon, 
    \end{eqnarray*}
    This completes the proof.
\end{proof}

\begin{thm}\label{thm:approx_neural_int}
    Let $T: X \longrightarrow Y$ and $\mathbb K$ be as in Theorem~\ref{thm:integral}. Then, for any choice of $\epsilon > 0$, there exists a Gavurin neural integral operator $\mathfrak T$ such that
    \begin{eqnarray*}
        \|T(x) - \mathfrak T(x)\|_Y < \epsilon,
    \end{eqnarray*}
    for any $x\in \mathbb K$. 
\end{thm}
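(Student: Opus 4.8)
The plan is to combine Theorem~\ref{thm:integral} with the universal approximation of continuous maps on compacta by Leray-Schauder-type neural networks, in the same spirit as Proposition~\ref{thm:LR_transf}. By Theorem~\ref{thm:integral} applied with precision $\epsilon/2$, we obtain an integral operator $U = \sum_i \eta_i(x) T_i(x)$ with $T_i(\bar x) = T(x_i) + T'(x_i)(\bar x - x_i) + \int_{x_i}^{\bar x} T''(x_i)(\bar x - x,\cdot)\,dx$, which already has exactly the shape of a Gavurin integral operator as in Equation~\eqref{eqn:Gavurin_operator}: a partition of unity $\{\eta_i\}$ weighting a sum of an affine part $F_i(\bar x) = T(x_i) + T'(x_i)(\bar x - x_i)$ plus a Gavurin integral $\int_{x_i}^{\bar x} R_i(x)(\cdot)\,dx$ with $R_i(x) \equiv T''(x_i)(\bar x - x, \cdot)$. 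So the remaining task is to replace each $F_i$ and each $R_i$ by a Leray-Schauder mapping composed with a neural network, without losing more than $\epsilon/2$ in the supremum norm over $\mathbb K$.

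First I would fix the finite covering $\{\mathbb B_i\}_{i=1}^k$ and partition of unity $\{\eta_i\}$ coming from Theorem~\ref{thm:integral}, and estimate, for $x \in \mathbb K$,
\begin{eqnarray*}
\|U(x) - \mathfrak T(x)\|_Y \leq \sum_i \eta_i(x)\Big(\|F_i(x) - \widetilde F_i(x)\|_Y + \Big\|\int_{x_i}^{x}\big(R_i(s) - \widetilde R_i(s)\big)(x)\,ds\Big\|_Y\Big),
\end{eqnarray*}
so it suffices to approximate each $F_i$ and each $R_i$ uniformly on the relevant compact set to within $\epsilon/4$ (the integral term inherits the bound since $\|x - x_i\| \leq r$ is bounded on $\mathbb B_i$, so the length of the path of integration is controlled). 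Next, for each $i$, the map $F_i : \mathbb B_i \cap \mathbb K \to Y$ is continuous on a compact set, so by the proof technique of Theorem~2.1 in \cite{EZ} (the Leray-Schauder universal approximation argument, using the mapping $P_n$ of Subsection~2.2) together with a standard neural network universal approximation statement on the resulting finite-dimensional compact image, $F_i$ is approximated to within $\epsilon/4$ by a Leray-Schauder mapping composed with a neural network. The same argument applies to $R_i$, viewed as a continuous map from a compact subset of $X$ into the Banach space $B(X,Y)$; one approximates $s \mapsto R_i(s)$ in operator norm, which controls $\|(R_i(s) - \widetilde R_i(s))(x)\|_Y$ uniformly in $x \in \mathbb K$. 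Defining $\mathfrak T(x) = \sum_i \eta_i(x)\big[\widetilde F_i(x) + \int_{x_i}^{x} \widetilde R_i(s)(x)\,ds\big]$ then gives a Gavurin neural integral operator with $\|T(x) - \mathfrak T(x)\|_Y < \epsilon$ on $\mathbb K$ by the triangle inequality.

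The main obstacle I expect is the approximation of the operator-valued map $R_i$: one must be careful that "Leray-Schauder mapping composed with a neural network" makes sense as an approximator of a map into the operator space $B(X,Y)$ rather than into $Y$ itself, and that the error in operator norm transfers to a uniform error on the integral term after composing with evaluation at $x$ and integrating over the segment $[x_i, x]$. This requires knowing that the relevant domains are genuinely compact (the segments $[x_i,\bar x]$ sweep out a compact set as $\bar x$ ranges over $\mathbb B_i \cap \mathbb K$, since $\mathbb K$ is compact and the segment map is continuous) and invoking the Gavurin integral's continuity and linearity properties from Subsection~2.3 to pass the approximation through the integral sign. Once these compactness and continuity bookkeeping points are handled, the rest is a routine triangle-inequality assembly, so — as in Proposition~\ref{thm:LR_transf} — the finer details can reasonably be left to the reader.
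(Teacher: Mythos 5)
Your proposal follows the same skeleton as the paper's proof: approximate $T$ by the integral operator $U$ of Theorem~\ref{thm:integral} to within $\epsilon/2$, observe that $U$ already has the form of Equation~\eqref{eqn:Gavurin_operator}, approximate the affine parts by Theorem~2.1 of \cite{EZ}, and assemble with the partition of unity $\{\eta_i\}$. Where you genuinely diverge is in the treatment of the integrand: the paper rewrites the remainder as $A_k(y)=\int_0^1 K_k(t,y)\,dt$ with $K_k(t,y)=T''(y_k)\bigl((1-t)(y-y_k),y-y_k\bigr)$, discretizes the scalar variable $t$ using uniform continuity, approximates each $Y$-valued slice $y\mapsto K_k(t_j,y)$ on $\mathbb K$ via \cite{EZ}, and patches over $t$ with a second partition of unity $\{\alpha_j\}$ subordinate to the balls $B_\delta(t_j)$; you instead propose to approximate the operator-valued map $s\mapsto R_i(s)$ directly in the norm of $B(X,Y)$. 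Your route is shorter (no $t$-discretization, no second partition of unity) but buys this at the cost of invoking \cite{EZ} with target $B(X,Y)$ rather than $Y$ --- acceptable since $B(X,Y)$ is a Banach space, but a point you rightly flag. The paper's route keeps every approximation $Y$-valued and pays with one extra layer of discretization.

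One imprecision in your write-up should be repaired: the integrand $R_i(s)=T''(x_i)(\bar x-s,\cdot)$ is not a function of the integration variable $s$ alone --- it also depends on the endpoint $\bar x$, so ``approximating $s\mapsto R_i(s)$ on the compact swept out by the segments'' is not well posed as stated (two different endpoints $\bar x$ can produce the same point $s$ but different operators). This is exactly why the paper reparametrizes to $K_k(t,y)$, a function of the pair $(t,y)\in[0,1]\times\mathbb K$, before approximating. Your argument can be salvaged either by adopting that reparametrization, or by noting that $v\mapsto T''(x_i)(v,\cdot)$ is a single bounded linear map $X\to B(X,Y)$ evaluated at $v=\bar x-s$, and approximating that map on the compact set of attainable $v$; but as written the step has a gap. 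With that repaired, the remaining bookkeeping (compactness of the parameter set, passing the uniform bound through the Gavurin integral, and the final triangle inequality) goes through as you describe.
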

\begin{proof}
         We apply Theorem~\ref{thm:integral} to approximate $T$ via an integral operator $U$, where 
\begin{multline*}
 U(y) = \sum_k \eta_k(y)T(y_k)+\sum_k \eta_k(y)T'(y_k)(y-y_k)\\  + \sum_k \eta_k(y)\int_0^1 T''(y_k)((1-t)(y-y_k),y-y_k)dt, 
\end{multline*} 
in such a way that $\|T(y) - U(y)\|_Y < \frac{\epsilon}{2}$, where $y_k$ are finitely many elements of $\mathbb K$ as in the proof of Theorem~\ref{thm:integral}. 

Let us consider the integral part of the operator $U$, namely 
\begin{eqnarray*}
    A(y) = \sum_k \eta_k(y)\int_0^1 T''(y_k)((1-t)(y-y_k),y-y_k)dt = \sum_k \eta_k(y)A_k(y),
\end{eqnarray*} where we have set $A_k(y) := \int_0^1 K_k(t,y)dt$, with $K_k(t,y) := T''(y_k)((1-t)(y-y_k),y-y_k)$. We have that $K_i$ is continuous with respect to $t$, and therefore uniformly continuous on $[0,1]$. We select $\delta>0$ such that $\|K_k(t_1,y) - K_k(t_2,y)\|_Y < \frac{\epsilon}{2}$ whenever $|t_1-t_2|<\delta$, and for each $y$ in the compact $\mathbb K$. Given this choice of $\delta$, we select a partition of points $0 = t_0, t_1, \ldots , t_{d-1}, t_d = 1$ such that $|t_i - t_{i+1}| < \delta$ for all $i = 0, 1, \ldots , d-1$. Using Theorem~2.1 in \cite{EZ}, we can find a neural network $\hat K_{ij}$, consisting of Leray-Schauder mappings and a deep neural network, such that $\|K_i(t_j)(y) - \hat K_{ij}(y)\|_Y < \frac{\epsilon}{2}$ for all $y\in \mathbb K$. Let us now consider the open covering $B_\delta(t_j)$ of open balls centered at $t_j$, with $j=0, \ldots, d$, and radius $\delta$. Let $\alpha_k$ denote a partition of unity subordinate to $\{B_\delta(t_j)\}$. One can directly verify that the operator $\hat K_i(t) := \sum_k \alpha_k(t)\hat K_{ik}$ satisfies the property that $\|K_i(t)(y)- \hat K_i(t)(y)\|_Y < \frac{\epsilon}{2}$ whenever $y$ is in $\mathbb K$. Since this construction can be preformed for each $i$, it follows that the integral part of the operator $U$ can be approximated by an integral operator with kernel consisting of Leray-Schauder mappings and (deep) neural networks. 

Another application of Theorem~2.1 in \cite{EZ} allows us to approximate $\sum_k \eta_k(y)T'(y_k)(y-y_k)$ with a deep neural network (and Leray-Schauder mappings) with precision $\frac{\epsilon}{2}$ over $\mathbb K$. Putting both approximations together, completes the proof.
\end{proof}

\section{Further perspectives}\label{sec:fp}

We conclude this article with some observations for further investigation. 

We would like to mention that the results of Theorem~\ref{thm:approx_transf} approximate integral operators between H\"older spaces, where the need of H\"older norms arises from the use of numerical quadrature to approximate integrals. Therefore, one might ask under what conditions transformers are universal approximators of integral operators between $L^p$ spaces. This question is yet to be answered. 

Theorem~\ref{thm:LR_transf} gives an approximation result for transformers in complete generality -- note that it refers to general Banach spaces, rather than some specific Banach space of functions. Once a norm is fixed, the Leray-Schauder mapping has an explicit form which is analytical, and therefore it seems that the new architecture is completely transparent. In general, however, it is useful to learn the Leray-Schauder mappings as well, to increase the applicability of the model. This is done for example in \cite{zappala2024leray}, where the resulting model is tested (in the $L^p$-space) on some integral equations and PDEs. The theoretical guarantees are also studied. It would be interesting to investigate extensions of these results in relation to the work in this article. 

Lastly, the results of Section~\ref{sec:NIE} show that (possibly) nonlinear mappings between general Banach spaces can be approximated through a generalized version of neural integral operator based on the Gavurin integral. While this is an interesting theoretical development, it is still unclear whether removing the use of the Gavurin integral would still allow for such a general approximation result.

\bibliographystyle{alpha}
\bibliography{refs}

\end{document}